\documentclass{article}




\usepackage[final,nonatbib]{neurips_2024}

\usepackage[numbers]{natbib}

\usepackage[utf8]{inputenc} 
\usepackage[T1]{fontenc}    
\usepackage{hyperref}       
\hypersetup{colorlinks, urlcolor=blue}
\usepackage{subcaption}
\usepackage{url}            
\usepackage{booktabs}       
\usepackage{amsfonts}       
\usepackage{amsmath}       
\usepackage{nicefrac}       
\usepackage{graphicx} 
\usepackage{wrapfig} 
\usepackage{microtype}      
\usepackage{float}          
\usepackage{xcolor}         

\usepackage{tikz}           
\usetikzlibrary{positioning}
\usepackage{verbatim}

\usepackage{amsthm}

\DeclareMathOperator{\cross}{\times}
\DeclareMathOperator{\matrixexp}{MatrixExp}
\DeclareMathOperator{\reshape}{Reshape}

\DeclareMathOperator{\BlockDiag}{BlockDiag}

\newcommand{\id}{1}
\DeclareMathOperator{\Hom}{Hom}

\newtheorem{proposition}{Proposition}
\theoremstyle{definition}

\theoremstyle{definition}

\theoremstyle{definition}

\newtheorem{remark}{Remark}[section]

\title{MatrixNet: Learning over symmetry groups using learned group representations}

%

\author{%
  Lucas Laird \\
  Khoury College of Computer Sciences \\
  Northeastern University\\
  Boston, MA 02115 \\
  \texttt{laird.l@northeastern.edu} \\
  \And
  Circe Hsu\\
  Department of Mathematics \\
  Northeastern University \\
  Boston, MA 02115 \\
  \texttt{hsu.circe@northeastern.edu} \\
  \And
  Asilata Bapat\\
  Mathematical Sciences Institute \\
  Australian National University \\
  Canberra, Australia \\
  \texttt{asilata.bapat@anu.edu.au} \\
  \And
  Robin Walters\\
  Khoury College of Computer Sciences \\
  Northeastern University \\
  Boston, MA 02115 \\
  \texttt{r.walters@northeastern.edu} \\
}

\begin{document}

\maketitle

\begin{abstract}
  Group theory has been used in machine learning to provide a theoretically grounded approach for incorporating known symmetry transformations in tasks from robotics to protein modeling. In these applications, equivariant neural networks use known symmetry groups with predefined representations to learn over geometric input data. We propose MatrixNet, a neural network architecture that learns matrix representations of group element inputs instead of using predefined representations. MatrixNet achieves higher sample efficiency and generalization over several standard baselines in prediction tasks over the several finite groups and the Artin braid group.  We also show that MatrixNet respects group relations allowing generalization to group elements of greater word length than in the training set. Our code is available at \url{https://github.com/lucas-laird/MatrixNet}.
\end{abstract}

\section{Introduction}

The choice of representation for input features is a key design aspect for deep learning.  While a wide variety of data types are considered in learning tasks, for example, sequences, sets, images, time series, and graphs, neural network architectures admit only tensors as input. Various methods exist for mapping different input types to tensors such as one-hot embeddings, discretization of continuous signals, learned token embeddings of image patches or words \cite{vaswani2017attention, dosovitskiy2020image}, adjacency matrices \cite{kipf2016semi}, positional encodings \cite{vaswani2017attention}, or spectral embeddings \cite{bruna2013spectral}. 

In this paper we consider the question of what feature representations to use for learning tasks with inputs coming from a symmetry group.  There are many examples of tasks defined over symmetry groups, such as policy learning in robotics \cite{wang2022mathrm}, reinforcement learning \cite{Agostinelli_McAleer_Shmakov_Baldi_2019}, pose estimation in computer vision \cite{murphy2021implicit}, sampling states of quantum systems \cite{boyda2021sampling}, inference over orderings of a set \cite{huang2009fourier}, and group-theoretic invariants in pure mathematics.  Past work has typically employed fixed representations chosen from among the known representation theory of the group.   Representation theory is the branch of mathematics concerned with classifying the set of representations of a group $G$ which, in this context, refers to homomorphic realizations of a group in terms of $n \times n$ matrices.  For groups with well understood representation theory, for example, the symmetric group $S_n$ or $\mathrm{SO}(n)$, this provides a ready set of embeddings for converting group elements into tensors for use in downstream models.

Trial and error or topological analysis has shown that the choice of group representation is critical for learning \cite{falorsi2018explorations,zhou2019continuity,bregier2021deep, geist2024learning}.\footnote{The term representation is used in both deep learning and mathematics with related but different meanings.  We will disambiguate the former as a \emph{feature representation} and the latter as a \emph{group representation}.  }

Instead of using predefined group element representations, we propose to learn feature representations for each group element using a learned group representation.  That is, we learn to map group elements to invertible $n\times n$ matrices which respect the symmetry group structure.  There are several advantages to this strategy.  First, unlike predefined group representations, learned representations allow the model to adapt to the given task and capture relevant information for the learning task.  Second, learned representations provide reasonable correlations between the learned features for different group elements since they incorporate algebraic structure into the model. This structure is encouraged using free group generators and group relation regularization. Third, relative to learned vector embeddings, learned matrix representations are very parameter efficient for encoding different group elements, reducing the problem to learning only representations of group generators; using sequence encoding, our method is able to generalize to combinatorially large or even infinite groups.  Fourth, the learned representation admits analysis in terms of the irreducible subspaces of the generators, giving insight into the model's understanding of the task.

We integrate our learned group representation into a specialized architecture, MatrixNet, adapted for learning mappings related to open problems in representation theory. We compare against several more general baselines on order prediction over finite groups and estimating sizes of categorical objects under an action of the Artin braid group. Through our experiments we observe that our approach achieves higher sample efficiency and performance than the baselines. We additionally show that MatrixNet's constraints allow for it to generalize to unseen group elements automatically without the need for additional data augmentation. 

Our contributions include:
\begin{itemize}
    \item Formulation of the problem of learning over groups as a sequence learning problem in terms of group generators and invariance to group axioms and relations, 
    \item The MatrixNet architecture, which utilizes learned group representations for flexible and efficient feature learning over symmetry groups,
    \item The matrix block method for constraining the network to respect the group axioms and an additional loss term for learning group relations,
    \item Empirical validation showing MatrixNet outperforms baseline models on a task over the symmetric group and a task over the braid group related to open problems in mathematics.
\end{itemize}

\section{Related Works}

\paragraph{Mathematically Constrained Networks}

Many deep learning methods incorporate mathematical constraints to better model underlying data. For instance, the application of graph neural networks \cite{kipf2016semi,
fang2024phonon} to problems with an underlying graph structure has led to state of the art performance in many domains. Deep learning models have also been designed for tasks with known mathematical formulations by parameterizing components of algorithmic solutions as neural networks and leveraging their structures for more efficient optimization~\cite{pervez2023differentiable, pervez2024mechanistic}.
More broadly the field of geometric deep learning \cite{bronstein2021geometric} advocates for building neural networks which reflect the geometric structure of data in their latent features.
When symmetries are present in data, group equivariant neural networks \cite{cohen2016group,kondor2018generalization,thomas2018tensor,mondal2024equivariant} can enable improved generalization and data efficiency by incorporating known symmetries into the model architecture using the representation theory of groups.  Our method also utilizes group representations, but unlike equivariant neural networks, we use learned as opposed to fixed representations. We also focus on modeling functions defined on the group as opposed to between representations of the group.

\paragraph{Learning Structured Representations}
Instead of using predefined representations as inputs, many methods seek to learn mathematically structured representations from data. This idea has been applied in physics \cite{musaelian2022learning,douglas2022numerical}, robotics \cite{ryu2023equivariant}, world models \cite{kipf2019contrastive}, self-supervised learning \cite{garrido2023self, dangovski2022equivariant}, and unsupervised disentanglement \cite{quessard2020learning}. 
 \citet{park2022learning}, for example, use a combination of learned symmetry and equivariant constraints to map images to group elements or vectors in group representations. Similar techniques are used in symmetry discovery where the underlying group symmetry is not known a priori~\cite{Dehmamy2021discovery, gabel2023discovery, yang2024symmetry}. \citet{yang2024symmetry} use a generative model to learn latent representations with a linear group action in order to find unknown group symmetries in  data. Here, in contrast, we start with a known group and learn a matrix representation.
\citet{hajij2021algebraicallyinformed} propose algebraically-informed neural networks which learn a non-linear group action from a group presentation.  We also learn a group action but consider linear group actions and the learning signal comes not only from the group presentation but also a downstream task.

\paragraph{Deep Learning for Math}

Recent work has shown deep learning can be useful for providing examples, insight, and proofs related to open problems in  mathematics. One approach is the application of language models to mathematics \cite{imani2023mathprompter}, which has the benefit of flexibility in how the model is prompted yet is difficult to interpret and prone to errors. Meanwhile significant work has been done in the area of symbolic regression and automated theorem proving \cite{Trinh_Wu_Le_He_Luong_2024}. Other work applies deep learning to the direct modeling of partial differential equations \cite{raissi2017physics,Dabrowski_2020}. These methods can perform exceptionally well on real-world data \cite{pathak2022fourcastnet}, but suffer when trying to interpret predictions made for the purposes of mathematical research.  
  Another avenue involves training graph neural networks on mathematical data such as group or knot invariants and analyzing the learned representations to see which features are significant as a way to provide intuition to  mathematicians \cite{Davies_Veličković_Buesing_Blackwell_Zheng_Tomašev_Tanburn_Battaglia_Blundell_Juhász_et_al._2021} .  Our method also uses structured inputs and learned features, but uses a sequence model and learned group representation instead of a graph neural network with learned node attributes.

\section{Background}
\subsection{Group Theory}

Groups encode systems of symmetry and have been used in machine learning to build invariance into neural networks to various transformations \cite{cohen2016group}.
Formally, a \textbf{group} $G$ is a set equipped with an associative binary operation $\circ \colon G \times G \to G$ which satisfies two axioms: (1) there exists an identity element $\id \in G$, such that $g \circ \id = \id \circ g = g$, (2) for each $g \in G$, there exists an inverse $g^{-1} \in G$ such that $g \circ g^{-1} = g^{-1} \circ g = \id$.  Examples of groups include finite groups such as the dihedral group $D_4$ which gives the symmetries of the square, $\mathrm{SO}(3)$, the continuous group of 3D rotations, or $(\mathbb{Z},+)$ the infinite discrete group of integer shifts.

Since groups may be combinatorially large or infinite, it is essential to encode their elements and compositional structure in a succinct way.  For many discrete groups, generators and relations provide such a description. A set of elements $S = \{g_1, ..., g_n\} \subseteq G$ are called \textbf{generators} if every element of $G$ can be written as a composition of $g_1, g_1^{-1}, ..., g_n, g_n^{-1}$. In general, each element of $G$ may be written in many different ways in terms of the generators; this non-uniqueness is encoded using a set of relations.  A set $R = \{ r_1, \ldots, r_m \}$ of words in the generators $S$ are \textbf{relations} for $G$ if each word $r_i$ is equal to the identity in $G$ and if $R$ generates the entire set of words equal to identity under composition and conjugation.  The generators and relations of a group taken together are called a \textbf{presentation} and denoted $G = \langle g_1, ..., g_n\ |\ r_1, ..., r_m \rangle$.  For example $D_4 = \langle r , f\ |\ r^4 = f^2 = f r f r \rangle$.  Due to relations, group elements do not have a unique word representation.  For example $frf = r^3 = r^7$ all represent the same group element. By convention relations are sometimes stated as equalities instead of single group elements, for example $f r f = r^3$.  The \textbf{free group} $F_S = \langle g_1,\ldots, g_n \rangle$ is defined to have no relations except for those coming from the two group axioms above.

An important notion for our discussion is the \textbf{order} of an element. If $g \in G$ then the order of $g$, denoted $|g|$, is the smallest $k$ such that $g^k=e$. (For non-finite groups, $k$ may be infinity). The order of the group $|G|$ is simply the number of elements in the group. For any $g$, Lagrange's theorem implies that $|g|$ is a divisor of $|G|$ \cite{Artin_2011}, which restricts the possible orders an element may take.

\subsection{Representation Theory}

Abstract group presentations are difficult to work with in many settings.  Group representations map group elements to invertible matrices such that composition of group elements corresponds to matrix multiplication.  This gives the group a natural action on vector spaces and allow for analysis of the group using linear algebra. Formally, a \textbf{representation of a group} $G$ is a group homomorphism $\Phi : G \rightarrow \mathrm{GL}(n)$ to the group of invertible $n \times n$ matrices.  That is $\Phi(g_1 \cdot g_2) = \Phi(g_1) \cdot \Phi(g_2)$.  A property of $\Phi$ is that $\Phi(\id) = I_{n\times n}$ and $\Phi(g^{-1}) = \Phi(g)^{-1}$. Due to the homomorphism property, it is sufficient to define $\Phi$ for generators of the group $G$. For example, a $2\times2$ representation of $D_4$ is given by mapping $r$ to a $\pi/2$-rotation matrix and $f$ to a reflection over the $x$-axis.

The representations of many groups are well classified.  This provides a ready source of tensor representations for group elements to use as inputs for neural networks.  For example, for finite groups, by Maschke's Theorem \cite{maschke1898ueber}, representations may be decomposed into \textbf{irreducible representations}.  That is, there exists a basis such that the representation matrices are all block diagonal with the same block sizes and these blocks cannot be further subdivided.  The irreducible representations may then be further classified by computing character tables.

\subsection{Symmetric and Braid Group} \label{sec:symmetric-group}

The \textbf{braid group} on $n$ strands 
$B_n$ has presentation 
\begin{equation}  \label{eqn:braidgrp}  
 \langle \sigma_1, ..., \sigma_{n-1}\ |\ \sigma_i\sigma_j = \sigma_j\sigma_i \text{ for $|i-j| \geq 2$ }, \sigma_i\sigma_{i+1}\sigma_i =\sigma_{i+1}\sigma_{i}\sigma_{i+1}\text{ for $1 \leq i \leq n-2 $}\rangle.
\end{equation}
The braid group intuitively represents all possible ways to braid a set of $n$ strands. The generators $\sigma_i$ correspond to twisting strand $i$ over $i+1$ and $\sigma_i^{-1}$ is the reverse, twisting strand $i+1$ over $i$. It is defined topologically as equivalence classes up to ambient isotopy of $n$ non-intersecting curves in $\mathbb{R}^3$ connecting two sets of $n$ fixed points. The braid group is infinite and though some representations are known, they are not fully classified \cite{abad2014introduction}.  The braid group has important connections to knot theory, mathematical physics, representation theory, and category theory.

The \textbf{symmetric group} on $n$ elements, denoted $S_n$, is defined as the set of bijections from $\{1,\ldots,n\}$ to itself.  It is also a quotient of the braid group $B_n$ and has a presentation similar to Eqn. \ref{eqn:braidgrp} but with additional relations $\sigma_i^2 = \id$ for $1 \leq i \leq n-1$.  Here $\sigma_i$ is the transposition $(i\;\;i\!+\!1)$. The symmetric group has finite order $|S_n|=n!$.  Representations of the symmetric group are well understood. The irreducible representations are parameterized by partitions of $n$. For more details, see \cite{fulton2013representation}.

\subsection{Categorical Braid Actions}
\label{sec:catbraid}
One current active research problem in mathematics concerns actions of braid groups on categories.
A category is an abstract mathematical structure that has \emph{objects} and maps or \emph{morphisms} between objects, satisfying several coherence axioms. For example, the category Vect$_{\mathbb{R}}$ has objects which are real vector spaces and morphisms which are linear maps. \emph{Functors} are maps between categories that take objects to objects and morphisms to morphisms between the corresponding objects, satisfying several compatibility conditions.
The action of a group \(G\) on a category \(\mathcal{C}\) means that each group element \(g \in G\) is associated with an invertible functor \(F_g \colon \mathcal{C} \to \mathcal{C}\), such that any relation \(x = y\) in the group implies that the corresponding functors \(F_x\) and \(F_y\) are naturally isomorphic.

Given a category on which a braid group acts, mathematicians are interested in measuring how objects grow under repeated applications of elements of the braid group. The ``size'' of an object in a category may be measured using a tool called Jordan--H{\"o}lder filtrations.  For example, \citet{bap.deo.lic:20} attempt to measure growth rates of objects in a specific category \(\mathcal{C}_n\)  under repeated applications of certain twist functors \(\sigma_{P_i}\), which define an action of the braid group $B_n$ on \(\mathcal{C}_n\). Each object in the category has a Jordan--H{\"o}lder filtration giving a unique vector in $\mathbb{Z}^n_{\geq 0}$ of Jordan--H{\"o}lder multiplicities.  For more details see \cite{bap.deo.lic:20} and Appendix~\ref{Category_Details}.

However, they are only able to compute the action in certain cases and a simple formula is elusive.  A complete description of the Jordan--H{\"o}lder multiplicities after applying combinations of \(\sigma_{P_i}\) to one of the generating objects is only known for \(n = 3\); that is, the case of the 3-strand braid group \(B_3\). Understanding how these multiplicities evolve under repeated application of braids is a challenging open research problem in mathematics.

\section{Methods}

We formulate the problem of learning a function on a symmetry group as a sequence learning problem using a presentation of the group in terms of generators and relations.  We propose MatrixNet which predicts the label using a learned matrix representation for the group.  The homomorphism property of the representation is enforced through a combination of model design and an auxiliary loss term.    

\subsection{Problem Formulation}

We consider task functions of the form $f \colon G \to \mathbb{R}^c$ where $G$ is a finite or discrete group and the output space $\mathbb{R}^c$ may represent either a regression target or class label.  While such tasks appear in computer vision, robotics, and protein modeling, we are particularly interested in problems in mathematics where neural models may lend additional examples and insight towards proving theorems.

To efficiently represent group elements in infinite or large groups, we consider a presentation of $G = \langle S\ |\ R \rangle$ in terms of generators $S = \lbrace  g_1,\ldots,g_n \rbrace$ and relations $R = \lbrace r_1,\ldots,r_m\rbrace$.  
Model inputs $g \in G$ are represented by sequences of generators $(g_{i_1},\ldots, g_{i_\ell})$ where $g = g_{i_1} \circ \ldots \circ g_{i_\ell}$ is of arbitrary length $\ell \geq 0$ and $1 \leq i_j \leq n$. For convenience, we can include the identity $g_0 = e$ among the generators to pad sequences without changing the group element. 

Since a single group element may be represented by different sequences, it is critical for the model $f_\theta$ to be invariant to both the group axioms and relations.  That is, we desire 
\begin{align}
     f_\theta(g_{i_1},\ldots,g_{i_k},e,g_{i_{k+1}},\ldots, g_{i_\ell}) &= f_\theta(g_{i_1},\ldots,g_{i_k},g_{i_{k+1}},\ldots, g_{i_\ell})\quad\text{(Identity),} \label{eqn:grpax1}\\
     f_\theta(g_{i_1},\ldots,g_{i_k},g_j,g_j^{-1},g_{i_{k+1}},\ldots, g_{i_\ell}) &= f_\theta(g_{i_1},\ldots,g_{i_k},g_{i_{k+1}},\ldots, g_{i_\ell})\quad\text{(Inverses),\label{eqn:grpax2}}
    \\ f_\theta(g_{i_1},\ldots,g_{i_k},r_j,g_{i_{k+1}},\ldots, g_{i_\ell}) &= f_\theta(g_{i_1},\ldots,g_{i_k},g_{i_{k+1}},\ldots, g_{i_\ell})\quad\text{(Relations).}
\end{align}

\subsection{MatrixNet}

\begin{figure}[th]
    \centering
    \includegraphics[width=1\textwidth]{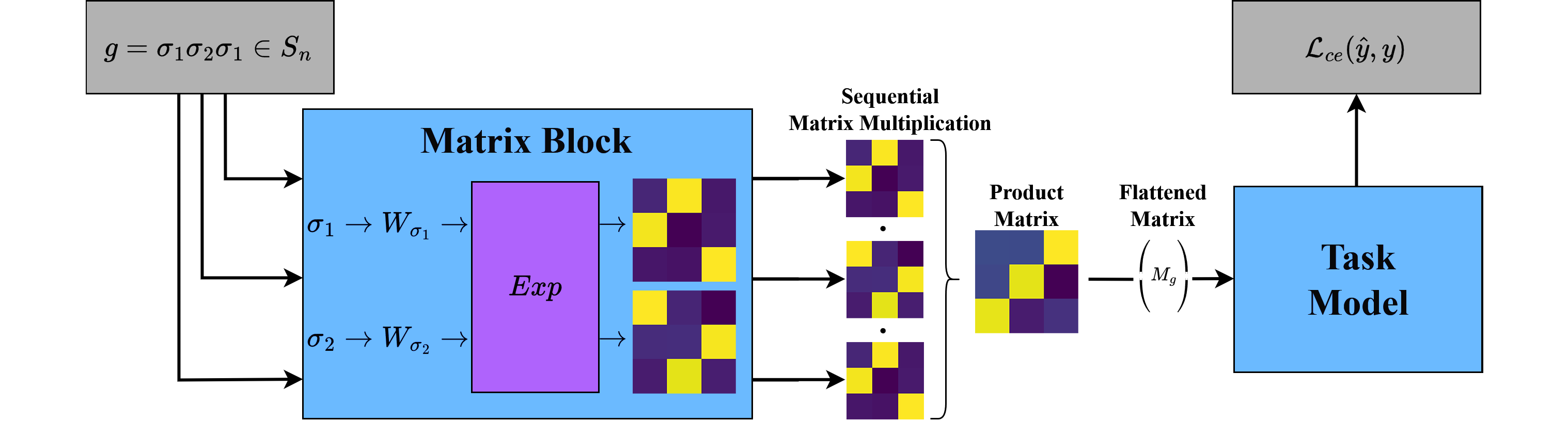}
    \caption{Schematic of MatrixNet for predicting order of elements of $S_3$. Input generators $\sigma_1$ and $\sigma_2$ are mapped to learned representations and sequentially multiplied to provide a matrix representation of group element $g$. The order is then predicted by the task model which is an MLP. }
    \label{fig:matrixnet}
\end{figure}

We propose MatrixNet (see Figure \ref{fig:matrixnet}), a neural sequence model which models functions on a group $G$ by taking as input sequences of generators for $G$. It achieves invariance to group axioms and relations through a combination of built in constraints and an auxiliary loss term. The key part of the MatrixNet architecture is the matrix block which takes a group generator $g_i$ as input
and outputs an invertible square matrix representation $M_{g_i}$.  The matrix representation $M_g$ for an arbitrary group element $g$ is defined as the product of matrix representations of generators needed to generate $g$. This matrix representation is then flattened and passed to a downstream task model (such as an MLP) which computes the label.  In what follows, we give a more detailed description of the matrix block and some variations on the architecture.

\paragraph{Signed one-hot encoding} 
We define the \emph{signed one-hot encoding}, a modified version of the traditional one-hot encoding, for encoding group generators used as input to MatrixNet. Let $(g_{i_1}^{\epsilon_1},g_{i_2}^{\epsilon_2}, ... ,g_{i_\ell}^{\epsilon_\ell})$ be a sequence of generators  where $0 \leq i_k \leq n$ and $\epsilon_k \in \{ \pm 1 \}$.
The signed one-hot encoding encodes each generator $g_{i_k}^{\epsilon_{k}}$ as a vector $v_{g_{i_k}} = \epsilon e_i = [0,\dots,0,\epsilon_k,0,\dots,0] \in \mathbb{R}^n$ which is 1 or -1 in the $i$th entry. The identity element $g_0 = \id$ is mapped to the zero vector $v_{\id} = \mathbf{0} \in \mathbb{R}^n$. The signed one-hot encoding is chosen since it intuitively relates a generator and its inverse as $v_{g^{-1}} = -v_g$.

\subsubsection{Matrix Blocks and Learned Matrix Representations}

 The matrix block is designed as a parameterized representation of the free group $F_S$, that is, a homomorphism $\Phi : F_S \rightarrow GL(n)$ which satisfies 3 properties: (1) the matrix $\Phi(g) = M_g$ is invertible, (2) $\Phi(\id) = I_{n\times n}$, and (3) if $\Phi(g) = M_g$ then $\Phi(g^{-1}) = M_g^{-1}.$ In what follows $v_{g_{i_k}}$ is the signed one-hot encoding of the generator $g_{i_k}$ and $W$ is a learnable parameter matrix. For a group element $g = g_{i_1}\dots g_{i_n}$, the matrix block is defined
\begin{align*}
    A_k =& \reshape(Wv_{g_{i_k}}) \\
    M_{g_{i_k}} =& \matrixexp(A_k) \\
    M_g =& M_{g_{i_1}} M_{g_{i_2}} \dots M_{g_{i_n}}
\end{align*}
where $\reshape$ reshapes a vector in $\mathbb{R}^{n^2}$ into a square $n \times n$ matrix.  
\begin{proposition} \label{prop:repfreegroup}
    Matrix Block defines a representation of the free group. 
\end{proposition}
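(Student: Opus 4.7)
The plan is to verify the three properties listed before the proposition, plus multiplicativity, and then to show that the assignment descends from the set of words in the generators to the free group $F_S$ itself (i.e., it is well-defined on equivalence classes under free reduction).

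First I would check the easy structural axioms at the level of single generators. For the identity, the signed one-hot encoding gives $v_{\id} = \mathbf{0}$, hence $A_{\id} = \reshape(W \cdot \mathbf{0}) = 0$, and therefore $M_{\id} = \matrixexp(0) = I_{n \times n}$. For the inverse of a generator, the signed one-hot encoding is constructed so that $v_{g_i^{-1}} = -v_{g_i}$; by linearity of $W$ and of $\reshape$, this gives $A_{g_i^{-1}} = -A_{g_i}$. Since $A_{g_i}$ and $-A_{g_i}$ commute, the matrix exponential satisfies $\matrixexp(A_{g_i})\matrixexp(-A_{g_i}) = I$, so $M_{g_i^{-1}} = M_{g_i}^{-1}$. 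Invertibility of $M_{g_i}$ for any generator is automatic, since the image of $\matrixexp$ lies in $GL(n)$, and invertibility of $M_g$ for an arbitrary word follows because $M_g$ is a product of invertible matrices.

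Next I would verify multiplicativity for words. Given $g = g_{i_1}\cdots g_{i_k}$ and $h = g_{j_1}\cdots g_{j_\ell}$, concatenating the two words gives the word for $gh$, and by the defining formula $M_{gh} = M_{g_{i_1}}\cdots M_{g_{i_k}} M_{g_{j_1}}\cdots M_{g_{j_\ell}} = M_g M_h$. This is essentially a tautology from how $M_g$ is defined on words.

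The one step that actually needs care is well-definedness on $F_S$ rather than on the free monoid of words: two words represent the same free group element if and only if they are related by finitely many insertions or deletions of adjacent pairs $g_i g_i^{-1}$ or $g_i^{-1} g_i$. So it suffices to show that inserting such a pair into a word does not change the resulting matrix. For any word $w = u \, v$ and generator $g_i$, we compute
\begin{equation*}
M_{u \, g_i g_i^{-1} \, v} \;=\; M_u \, M_{g_i} M_{g_i^{-1}} \, M_v \;=\; M_u \, \matrixexp(A_{g_i})\matrixexp(-A_{g_i}) \, M_v \;=\; M_u M_v \;=\; M_{uv},
\end{equation*}
where the middle equality uses $A_{g_i^{-1}} = -A_{g_i}$ together with the commutativity of $A_{g_i}$ with itself. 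The same argument handles $g_i^{-1}g_i$. Hence $\Phi(g) := M_g$ is well-defined on $F_S$, and together with the already-verified axioms this shows $\Phi$ is a group homomorphism $F_S \to GL(n)$, i.e., a representation of the free group.

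The only subtle point is this final well-definedness check; everything else follows immediately from the design choices (signed one-hot encoding making $v_{g^{-1}} = -v_g$, and $\matrixexp$ landing in $GL(n)$ with the property $\matrixexp(A)^{-1} = \matrixexp(-A)$). There is no analogous claim for non-free relations, which is precisely why the relation loss term is needed later to encourage the representation to descend from $F_S$ to $G$.
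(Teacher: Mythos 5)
Your proof is correct and follows essentially the same approach as the paper's: both rest on the signed one-hot encoding giving $v_{g^{-1}} = -v_g$ and $v_{\id} = \mathbf{0}$, together with $\matrixexp(\mathbf{0}) = I$ and $\matrixexp(A)\matrixexp(-A) = I$. The one thing you add beyond the paper's argument is the explicit check that the word-level definition of $M_g$ descends to the free group $F_S$ (invariance under insertion/deletion of $g_i g_i^{-1}$ pairs); the paper treats this as implicit in properties (2) and (3), and your making it explicit is a genuine, if small, improvement in rigor rather than a different route.
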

\begin{proof}
Property (1) is satisfied since the outputs of a matrix exponential are invertible.  Properties (2) and (3) follow from  $v_{g_{i_k}^{-1}} = -v_{g_{i_k}}$, $v_{\id} = \mathbf{0}$, and properties of the matrix exponential, 
\begin{align*}
    M_{\id} &= \matrixexp(\mathbf{0}_{n\times n}) = I_{n \times n} \\
    M_{g_{i_k}^{-1}}&= \matrixexp(-A_k) = M_{g_{i_k}^{-1}}. 
\end{align*}
\end{proof}

\subsubsection{Variations of Matrix Block}
We now present some variations of this simple design that satisfy the group homomorphism properties.

\paragraph{Linear Network (MatrixNet-LN)}
The first variant replaces the single parameter matrix $W$ with a linear network that has two parameter matrices $W_1, W_2$. The linear network matrix block changes the computation of the intermediate $A_k$ matrix to:
\begin{align*}
    A_k =& \reshape(W_2W_1v_{g_{i_k}})
\end{align*}
This is still a linear function of $v_{g_{i_k}}$ meaning Proposition \ref{prop:repfreegroup} still holds by the same argument.

While this variation does not give increased expressivity over the original formulation of the matrix block, the linear network can change the optimization landscape leading to different performance in practice. This variation is called MatrixNet-LN. 

\paragraph{Non-Linearity (MatrixNet-NL)}
The second variation introduces an element-wise odd non-linear function $f$. That is $f(-x) = -f(x)$ as with $\tanh$. The non-linear matrix block modifies
\begin{align*}
    A_k =& \reshape(f(W_1v_{g_{i_k}}))
\end{align*}

\begin{proposition} \label{prop:oddnonlin}
    Non-linear matrix block defines a representation of the free group.
\end{proposition}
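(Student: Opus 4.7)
The plan is to mirror the proof of Proposition \ref{prop:repfreegroup} by verifying the same three properties, with the only new element being that we must carefully exploit the fact that $f$ is odd rather than linear. Property (1) is immediate and identical to before: $M_g$ is a product of matrix exponentials, each of which is invertible, so $M_g \in \mathrm{GL}(n)$.

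For Property (2), I first observe that the oddness of $f$ forces $f(0) = 0$, since $f(0) = f(-0) = -f(0)$. Because $v_{\id} = \mathbf{0}$, we get $W_1 v_{\id} = \mathbf{0}$, then $f(W_1 v_{\id}) = \mathbf{0}$ element-wise, so $A_k = \mathbf{0}_{n \times n}$ and $M_{\id} = \matrixexp(\mathbf{0}_{n \times n}) = I_{n\times n}$.

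For Property (3), the key calculation is that on signed one-hot encodings the composition $f \circ W_1$ still sends a generator's input to the negative of its inverse's input. Specifically, $v_{g_{i_k}^{-1}} = -v_{g_{i_k}}$ implies $W_1 v_{g_{i_k}^{-1}} = -W_1 v_{g_{i_k}}$, and applying $f$ element-wise together with the odd assumption gives $f(W_1 v_{g_{i_k}^{-1}}) = -f(W_1 v_{g_{i_k}})$. After reshaping, the matrix for $g_{i_k}^{-1}$ is the negative of the matrix for $g_{i_k}$, so $M_{g_{i_k}^{-1}} = \matrixexp(-A_k) = \matrixexp(A_k)^{-1} = M_{g_{i_k}}^{-1}$, as required. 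Extending to arbitrary words in the free group is then automatic from the multiplicative definition $M_g = M_{g_{i_1}} \cdots M_{g_{i_n}}$ and the reversal identity $(AB)^{-1} = B^{-1} A^{-1}$.

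There is no real obstacle here; the whole proof hinges on the two consequences of oddness, namely $f(0)=0$ and $f(-x) = -f(x)$, which are precisely what the linear case used implicitly. The only thing worth stating carefully is that $f$ is applied element-wise, so oddness transfers from scalars to vectors; after that, the argument is essentially the same as Proposition \ref{prop:repfreegroup} with $W$ replaced by $f \circ W_1$.
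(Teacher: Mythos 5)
Your proof is correct and follows essentially the same route as the paper's: invertibility from the matrix exponential, and Properties (2) and (3) from the two consequences of oddness, $f(\mathbf{0})=\mathbf{0}$ and $f(-x)=-f(x)$, applied element-wise to $W_1 v_{g_{i_k}}$. You spell out a couple of steps the paper leaves implicit (that $f(0)=0$ follows from oddness, and the reversal identity for inverses of words), but the argument is the same.
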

\begin{proof}
    Property (1) is satisfied by the same argument in Proposition~\ref{prop:repfreegroup}. Since $f$ is an odd function, $f(W_1v_{\id}) = f(\mathbf{0})$ and $\reshape(f(W_1v_{g_{i_k}^{-1}})) = \reshape(f(-W_1v_{g_{i_k}})) = -A_k$. Therefore Properties (2) and (3) are satisfied by the same argument in Proposition~\ref{prop:repfreegroup}.
\end{proof}
 This variation can be combined with the first as Proposition~\ref{prop:oddnonlin} holds for linear transformations of $A_k$.  That is, $ A_k = \reshape(W_2 f(W_1v_{g_{i_k}}))$. Unless otherwise noted we set $f = \tanh$.

\paragraph{Block-Diagonal (MatrixNet-MC)}
The third variation is inspired by the decomposition of representations into irreducible representations.  For certain classes of groups such as finite groups, every representation decomposes such that the matrices $M_g$ have a consistent block diagonal structure in some basis.  Thus to learn an arbitrary representation of the group $G$, it suffices to learn a block diagonal representation assuming the blocks are large enough. 

This variation learns $\ell$ intermediate $n_j \times n_j$ matrices $A_{k_j}$ which are combined to form a block-diagonal $n \times n$ matrix $A_k$ where $n = \sum_{j=1}^{\ell} n_j$. The block-diagonal matrix block is defined with
\begin{align*}
    A_{k_j} =& \reshape(W_jv_{g_{i_k}}) \text{ for $j = 1$ to $\ell$} \\
    A_k =& \BlockDiag(A_{k_1}, A_{k_2}, \dots, A_{k_{\ell}}) 
\end{align*}
Note that $\matrixexp$ and matrix multiplication preserve the block structure. If the sizes $n_j$ are fixed to all be equal, this formulation can be implemented as a multi-channel matrix block where both $A_k$ and $M_{g_{i_k}}$ are $\ell \times n_j \times n_j$ tensors with $\ell$ channels. 

Each $A_{k_j}$ is calculated identically to $A_k$ in the original matrix block formulation, and $\BlockDiag$ is linear, so Proposition~\ref{prop:repfreegroup} still holds. This variation is also compatible with the previous two variations. In our experiments, we implement a 3-block version called MatrixNet-MC with a single linear layer and no non-linearity.

\subsection{Enforcing group relation invariances}\label{regularization}
The matrix block is constrained to learn a representation of the free group $F_S$.  As a consequence MatrixNet will satisfy \eqref{eqn:grpax1} and \eqref{eqn:grpax2} as desired.  However, most groups have relations which cannot be enforced through a simple weight-sharing scheme used in equivariant architectures~\cite{kondor2018generalization}. We propose to learn the relations through a secondary loss which measures how closely the representation respects the group relations. More concretely, let $G = \langle S | R \rangle$ be a group with relations $r_i \in R$. The loss is:
\begin{align*}
    \mathcal{L}_{rel} =& \sum_{r_i \in R}(||M_{r_i} - I_{n\times n}||) 
\end{align*}
Since MatrixNet learns a representation that is invariant to group axioms, it is sufficient to sum over only $\{ r_i \}_{i=1}^m$ and not all compositions of relations. For architectures which do not respect the free group structure, the relations $r_i$ alone may not guarantee that all equivalent words have identical feature representations, requiring potentially combinatorial amounts of data augmentation. This allows MatrixNet to both efficiently learn group relation invariance and simply verify this invariance without any data augmentation. 

\section{Experiments}
We use two learning tasks to evaluate the four variants of MatrixNet and compare our approach against several baseline models. We use several finite groups on a well understood task as an initial test to validate our approach and then move on to an infinite group, the braid group $B_{3}$, on a task related to open problems. As baselines, we compare to an MLP for fixed maximum sequence length and LSTM and Transformer models on longer sequences. Baseline model parameters were chosen so all of the models have approximately the same number of trainable parameters.

\subsection{Order Prediction in Finite Groups}
The first task is to predict the order of group elements in finite groups. Elements of finite groups are input as finite sequences of generators as described in Section~\ref{sec:symmetric-group}. The typical efficient algorithm for computing the order involves disjoint cycle decomposition, making order classification a non-trivial task. See Appendix~\ref{sym_dataset} for more details on the sampling method and data splits.

\paragraph{Models Compared}
We compare MatrixNet variants and three baselines for order prediction in $S_{10}$:
    (1) \textbf{MLP} with 3 layers with hidden dimension 256 and SiLU activations,
    (2) \textbf{Fixed representation} input to a 2-layer classifier MLP with 256 hidden dimensions and SiLU activations,
    (3) \textbf{LSTM} input to a 2-layer LSTM with 256 hidden dimensions with a subsequent MLP classifier using SiLU activations,
    (4) \textbf{MatrixNet-LN} with a 2-layer 256 hidden dimension matrix block and classifier network with SiLU activations.
    (5) \textbf{MatrixNet-MC} with a $2x2$ matrix block size over 5 matrix channels and classifier network with SiLU activations.
    (6) \textbf{MatrixNet-NL} with a 2-layer 256 hidden dimension matrix block with SiLU activations and classifier network with SiLU activations.
The precomputed representation is an ablated version of MatrixNet using a fixed representation of $S_{10}$ instead of a learned one. For the fixed representation, we use the standard $10 \times 10$ representation given by the permutation matrices corresponding to the group element. In MatrixNet-LN, the activation between layers of the matrix block is set to a linear passthrough while in MatrixNet-NL the activation is specified to be SiLU. MatrixNet-MC enforces a $2x2$ block diagonal structure on the learned representations corresponding to the 2-dimensional irreps of $S_{10}$.

We also note the use of SiLU activation in our $S_{10}$ MatrixNet model. Due to the generator self-inverse property we need not consider separate generator inverses, and so the odd function requirement given in Proposition~\ref{prop:oddnonlin} is not applicable.

\begin{table}[th]
    \centering
    \caption{MatrixNet and baseline performance on $S_{10}$ order prediction }
    \begin{tabular}{cc|cc}
    \toprule
         Model & Parameters & CE Loss ($10^{-2}$) & Acc  \\
    \midrule
         MLP & 365584  & $0.02\pm 0.01$ & $100 \pm 0$ \\
         Rep Ablation & 299792 & $4.8 \pm 0.5$ & $87 \pm 2$ \\
         LSTM & 270505 & $8.1 \pm 1.4$ & $77.3 \pm 5.2$ \\
         \hline
         MatrixNet-LN & 343968 &  $0.9 \pm 0.3$ & $99.4 \pm 0.4$ \\
         MatrixNet-MC & 82960 & $0.02 \pm 0.01$ & $100 \pm 0$ \\
         MatrixNet-Nonlinear & 343968  & $0.0003\pm 0$ & $100 \pm 0$ \\
         
    \bottomrule
    \end{tabular}

    \label{tab:S10baseline}
\end{table}

\begin{table}[th]
    \centering
    \caption{MatrixNet performance on finite group order prediction }
    \begin{tabular}{c|ccc|cc}
    \toprule
         Group & |G| & Rep. Size & Classes & CE Loss ($10^{-2}$) & Acc (\%) \\
    \midrule
         $S_{10}$ & $~10^6$ & 10 & 16 & $0.0003\pm 0$ & $100\pm 0$ \\
         $S_{12}$ & $~10^8$ & 12 & 23 & $0.8 \pm 0.2$ & $99.2 \pm 0.4$ \\
         $C_{11} \cross C_{12} \cross ... \cross C_{15}$ & $~10^5$ & 10 & 35 & $2.1 \pm 2.4$ & $87.3 \pm 18$ \\
         $S_5 \cross S_5 \cross S_5 \cross S_5$ & $~10^8$ & 20 & 12 & $2.6 \pm 0.4$ & $98.3 \pm 0.3$ \\
         
    \bottomrule
    \end{tabular}

    \label{tab:finitegroups}
\end{table}

\paragraph{Model Comparison Results} Results of the experiments are summarized in Table \ref{tab:S10baseline}. All variants of MatrixNet achieve a classification accuracy of at least $99\%$ across multiple independent trials. Of note is the inferior performance of the precomputed representation baseline compared to the MLP and MatrixNet on both loss and accuracy metrics, suggesting that there is an advantage to a learnable representation. These results on $S_{10}$ order classification validate that group representation learning can aid learning of tasks defined over groups.

\paragraph{Order Prediction over Different Groups}  In order to demonstrate the flexibility of MatrixNet, we show that MatrixNet can be used to predict order across several different sizes and types of groups. In addition to $S_{10}$, we evaluate MatrixNet on a larger symmetric group $S_{12}$ an  Abelian group $C_{11} \cross C_{12} \cross C_{13}\cross C_{14}\cross C_{15}$ and a product $S_5 \cross S_5 \cross S_5 \cross S_5$. These product groups provide a more complex group structure which MatrixNet must learn for successful generalization, with varying representation structure. The results for these experiments are summarized in Table \ref{tab:finitegroups}. 

MatrixNet achieves a high classification accuracy across all additional groups tested. However, accuracy for the Abelian group is lower than the accuracies for other groups tested ($87\%$ vs $~99\%$). One explanation for this decrease in accuracy is due to the large number of valid orders of the group. Additionally, due to the structure of finite Abelian groups, many element orders will be underrepresented in random sampling.

\subsection{Categorical Braid Action Prediction}
In our second experiment, we train models to predict the Jordan--H{\"o}lder multiplicities from braid words in the braid group $B_{3}$ (see Section \ref{sec:catbraid}). The task is formulated as a regression task with a mean-squared error (MSE) loss function. The Jordan--H{\"o}lder multiplicities are integers, so we evaluate accuracy by rounding the vector entries to the nearest integer. This accuracy is reported as an average accuracy over the three entries of the Jordan--H{\"o}lder multiplicities vector. Elements of $B_{3}$ are generated by two generators $\sigma_1, \sigma_2$ and their inverses and are encoded using a signed one-hot encoding. For more details on the dataset generation process and data splits see Appendix~\ref{cat_braid_details}. 

We additionally performed an experiment to evaluate how well MatrixNet generalizes to unseen braid words longer than those seen in training. For this experiment, we compare against the MLP and LSTM since these were the highest performing baselines.

\paragraph{Baseline Comparison Results}
We trained all of the models for 100 epochs as all of the models except the Transformer converged within 100 epochs. Despite performance converging much faster than 100 epochs for most MatrixNet variants, we found that additional epochs of training improved the model's invariance to group relations with minimal variations in performance. Table~\ref{tab:braid_baseline} shows the performance of the baseline models and MatrixNet variations at 50 and 100 epochs of training averaged over 5 runs. The simple MatrixNet model was the worst performing MatrixNet variant slightly outperforming the baseline models at 100 epochs. All other variants of MatrixNet vastly outperform baselines with both MatrixNet-LN and MatrixNet-NL achieving MSE far below the baselines and perfect or near perfect accuracy across all runs. These results confirm the results from the order prediction experiments and demonstrate the advantage of MatrixNet for learning over groups.

\begin{table}[th]
    \centering
    \caption{MSE and accuracy of Jordan--H{\"o}lder multiplicities for baseline models and MatrixNet variations. Results are averaged over 5 runs. See Appendix~\ref{cat_braid_details} for model parameters and training details.}
    \begin{tabular}{cc|ccc}
        \toprule
        Model Type & Parameters & MSE Epoch 50 & MSE Epoch 100 & Avg. Acc. \\
        \midrule
        Transformer & $63779$ & $3.013 \pm 0.147$ & $2.895 \pm 0.024$ & $42.3\% \pm 1.2\%$ \\
        MLP & $52099$ & $0.315 \pm 0.004$ & $0.132 \pm 0.009$ & $89.1\% \pm 0.73\%$\\
        LSTM & $51027$ & $0.345 \pm 0.149$ & $0.075 \pm 0.035$ & $93.0\% \pm 3.9\%$\\
        \hline
        MatrixNet & $42507$ & $0.543 \pm 0.458$ & $0.082 \pm 0.034$ & $95.1\% \pm 0.9\%$ \\
        MatrixNet-LN & $42883$ & $\mathbf{7.1}\textbf{e--}\mathbf{4} \mathbf{\pm 2.4}\textbf{e--}\mathbf{4}$ & $0.001 \pm 0.001$ & $\mathbf{99.9\% \pm 0.004\%}$ \\
        MatrixNet-MC & $41987$ &  $0.063 \pm 0.033$ & $0.014 \pm 0.006$ & $98.8\% \pm 0.6\%$\\
        MatrixNet-NL & $42883$ & $0.002 \pm 0.003$ & $\mathbf{6.4}\textbf{e--}\mathbf{4} \mathbf{\pm 3.6}\textbf{e--}4$ & $\mathbf{99.9\% \pm 0.008\%}$\\
        \bottomrule
    \end{tabular}
    \label{tab:braid_baseline}
\end{table}

\paragraph{Length Extrapolation Results}
 The results in Figure~\ref{fig:extrapolation_plots} show how the MSE and average accuracy change as input length increases averaged over 10 runs. A single run was omitted from the results for MatrixNet due to training instability. We observe explosive MSE growth for MatrixNet and MatrixNet-MC, but both maintain higher average accuracy than the baselines. The high variance in MSE suggests that both variants are capable of extrapolating despite struggling compared to the other two variants. MatrixNet-LN and MatrixNet-NL both maintain near-zero average MSE as length increases and consequently achieve near perfect average accuracy as length increases.

 The discrepancy in extrapolation performance of the MatrixNet variations suggest that MatrixNet-LN and MatrixNet-NL learn better representations than MatrixNet and MatrixNet-MC. To measure this, we compare the relational error of the four MatrixNet variations in Table~\ref{tab:braid_rel_error}. The group $B_3$ has only the braid relation $\sigma_1\sigma_2\sigma_1 = \sigma_2\sigma_1\sigma_2$. We calculate the relational error as the Frobenius norm of the difference $||M_{\sigma_1}M_{\sigma_2}M_{\sigma_1} - M_{\sigma_2}M_{\sigma_1}M_{\sigma_2}||$. We also compute this distance between two non-equivalent braids $\sigma_1\sigma_1\sigma_2, \sigma_2\sigma_2\sigma_1$ for reference under Non-Relational Difference in Table~\ref{tab:braid_rel_error}. 
 
 The relational error results in Table~\ref{tab:braid_rel_error} mirrors the extrapolation performance confirming that representation quality is important for effective generalization. High relational error compounds over longer word lengths hindering generalization whereas low relational error allows MatrixNet to automatically generalize to longer word lengths through invariance to group relations. These results show that MatrixNet, particularly the MatrixNet-LN and MatrixNet-NL variants, is able to learn group representations invariant to the group relations allowing for effective generalization to longer unseen group words.

\begin{figure}[th]
    \centering
    \includegraphics[width=1\textwidth]{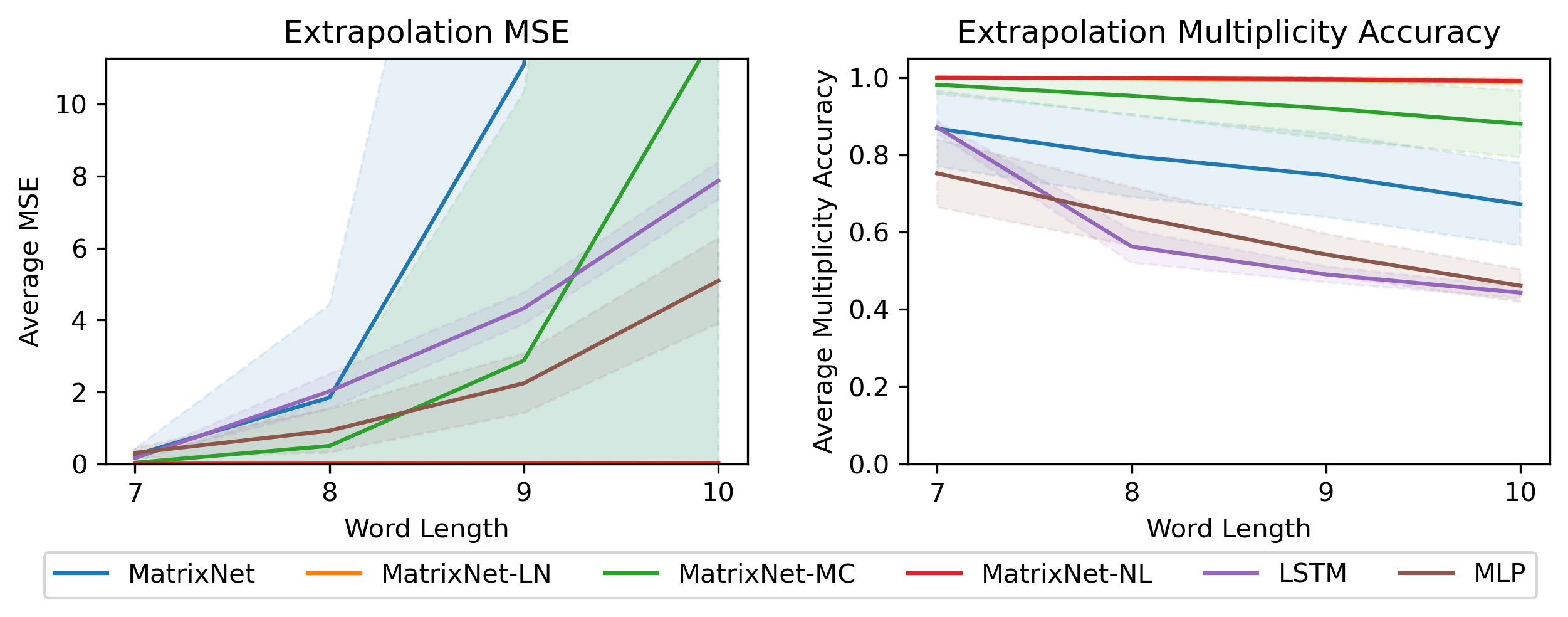}
    \caption{Length extrapolation results. Left: The plot shows how MSE grows for increasing word lengths ($y$-axis is truncated for clarity). Right: The plot shows how the average accuracy decays for increasing word lengths. The relatively high accuracy of MatrixNet and MatrixNet-MC compared to baselines suggests that the high MSE is caused by outliers with multiplicity predictions much higher than the ground truth.}
    \label{fig:extrapolation_plots}
\end{figure}

\begin{table}[th]
    \centering
    \caption{Relational error of MatrixNet models trained on length extrapolation dataset. The non-relational difference is computed between two non-equivalent braids for comparison. High relational error compounds for longer words resulting in poor extrapolation.}
    \begin{tabular}{ccc}
        \toprule
        MatrixNet Variation & Relational Error & Non-relational Difference\\
        \midrule
        MatrixNet & $13.63 \pm 6.83$ & $33.64 \pm 11.28$\\
        MatrixNet-MC & $2.58 \pm 2.10$ & $9.60 \pm 2.5$\\
        MatrixNet-LN & $0.071 \pm 0.018$ & $4.96 \pm 0.55$\\
        MatrixNet-NL & $0.066 \pm 0.009$ & $4.99 \pm 0.45$\\
        \bottomrule
    \end{tabular}
    \label{tab:braid_rel_error}
\end{table}

\subsection{Visualizing the Learned Representations}
We present some visualizations of the learned representations of the braid group from the highest performing variant, MatrixNet-NL. Figure~\ref{fig:extrapolation_plots} shows visual plots of the learned representations. In the last two plots, the learned representation for two equivalent words are approximately equal even though this relation is not among the relations $r_j$ used in the loss $\mathcal{L}_{rel}$. This shows MatrixNet does indeed generalize over relations, allowing it to generate nearly identical representations for equivalent words. 
\begin{figure}[th]
    \centering
    \includegraphics[width=0.9\textwidth]{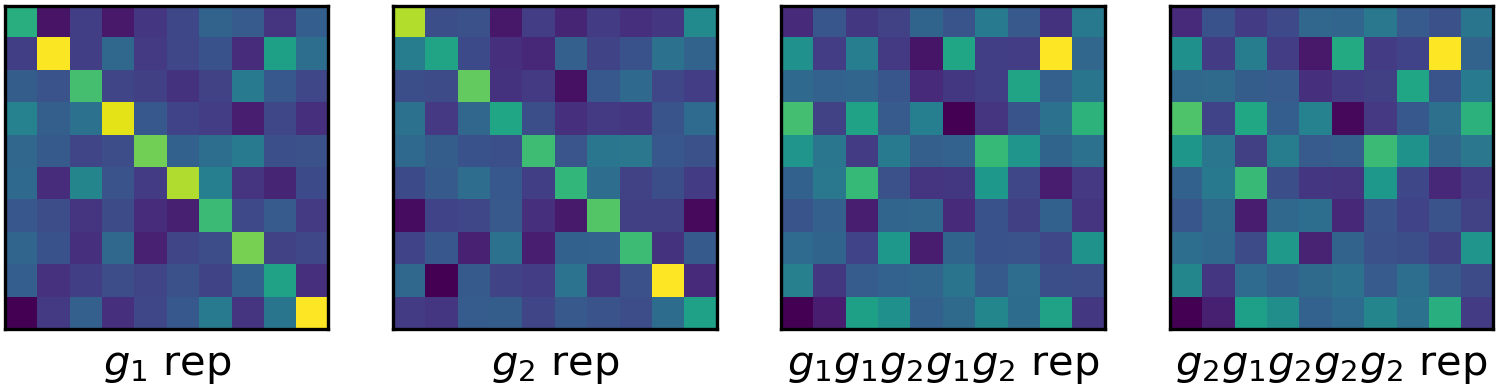}
    \caption{Visualization of learned matrix representations. The first two figures show the representations for the generators of $B_{3}$. The last two figures show the representation for equivalent words that are generated by the relations of $B_{3}$.}
    \label{fig:visual_rep}
\end{figure}

\section{Conclusion}
In this paper we have presented MatrixNet, a novel neural network architecture designed for learning tasks with group element inputs. We developed 3 variations of our approach which structurally enforce group axioms and a regularization approach for enforcing invariance to relations. We evaluate MatrixNet on two group learning problems over several finite groups and $B_{3}$ and demonstrate our model's performance and ability to automatically generalize to unseen group elements.
In future work we plan to develop interpretability analysis methods based on group representation theory to better understand the structure of MatrixNet's learned representations. Understanding the learned representations may provide valuable insights and explanations of the model outputs assisting with generating new conjectures for open mathematical research problems.

\paragraph{Limitations}  The current work relies on the assumption that the studied group is finitely presented which limits us to discrete groups. However, learned group representations may also be useful for learning over Lie groups.  In such case, extending our method will require working with infinitesimal Lie algebra generators.  Additionally, while the group axioms are strictly enforced by the model structure, the fact the relations are enforced using auxiliary loss terms means the homomorphism property is not exact. Future work may explore methods of reducing this error.

\paragraph{Acknowledgements}  This work is supported in part by NSF 2134178. Lucas Laird is supported by the MIT Lincoln Laboratory Lincoln Scholars program. Circe Hsu is supported by a Northeastern University Undergraduate Research and Fellowships PEAK Experiences Award. The authors would like to thank Mustafa Hajij and Paul Hand for helpful discussions.
{
\small
\bibliographystyle{unsrtnat}

}


\appendix
\section{More details about the categorical braid group action} \label{Category_Details}
The aim of this appendix is to provide a few more details about the particular categorical braid group action that we use in our experiments.

\subsection{Sketch of the construction of the category}
The category $\mathcal{C}_n$ we consider is the 2-Calabi--Yau triangulated category associated to the Dynkin graph of type $A_n$.
This is an undirected graph with $n$ vertices and $n-1$ edges arranged in a line, as shown in Figure~\ref{fig:dynkin}.
\begin{figure}[ht]
\centering
\begin{tikzpicture}
\node(0) {$\bullet$};
\node[right of=0](1) {$\bullet$};
\node[right of=1](dots) {$\cdots$};
\node[right of=dots](nm1) {$\bullet$};
\node[right of=nm1] (n) {$\bullet$};
\node[font=\scriptsize, below=0.1cm of 0] {$0$};
\node[font=\scriptsize, below=0.1cm of 1] {$1$};
\node[font=\scriptsize, below=0.1cm of nm1] {$n-1$};
\node[font=\scriptsize, below=0.1cm of n] {$n$};
\draw[-] (0) -- (1) -- (dots) -- (nm1) -- (n);
\end{tikzpicture}
  \caption{The Dynkin graph of type $A_n$.}\label{fig:dynkin}
\end{figure}
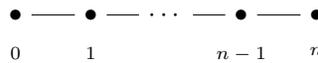

Let $\Gamma_n$ be the Dynkin graph of type $A_n$.
Let $\Gamma_n^{\text{dbl}}$ be its \emph{doubled quiver}, which is a directed graph in which each undirected edge of $\Gamma$ is replaced by a pair of oppositely oriented directed edges, as shown in Figure~\ref{fig:gamma-dbl}.
\begin{figure}[ht]
\centering
\begin{tikzpicture}
\node(0) {$\bullet$};
\node[right of=0](1) {$\bullet$};
\node[right of=1](dots) {$\cdots$};
\node[right of=dots](nm1) {$\bullet$};
\node[right of=nm1] (n) {$\bullet$};
\node[font=\scriptsize, below=0.1cm of 0] {$0$};
\node[font=\scriptsize, below=0.1cm of 1] {$1$};
\node[font=\scriptsize, below=0.1cm of nm1] {$n-1$};
\node[font=\scriptsize, below=0.1cm of n] {$n$};
\path[->]
(0) edge[bend left] (1)
(1) edge[bend left] (0)
(1) edge [bend left] (dots)
(dots) edge[bend left] (1)
(dots) edge[bend left] (nm1)
(nm1) edge[bend left] (dots)
(nm1) edge[bend left] (n)
(n) edge[bend left] (nm1)
;
\end{tikzpicture}
  \caption{The doubled quiver $\Gamma_n^\text{dbl}$ of the Dynkin graph of type $A_n$.}\label{fig:gamma-dbl}
\end{figure}
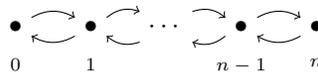

Recall that the \emph{path algebra} of a directed graph (or quiver) $Q$ over some field $k$ is generated as a free vector space by all possible paths in $Q$, including the trivial paths at each vertex.
The product in $kQ$ is given as follows: let $q \colon a \to b$ be a path and $p \colon c \to d$ be a path.
The product $pq$ is equal to zero unless $b = c$.
If $b = c$, then the product $pq$ is simply the composite path $a \xrightarrow{q} b \xrightarrow{p} d$.
The path algebra of $Q$ is denoted $kQ$.

We are interested in a quotient of the path algebra of $\Gamma_n^\text{dbl}$ called the zig-zag algebra, and denoted $Z_n$.
To obtain $Z_n$, we impose the following relations on $k\Gamma_n^\text{dbl}$.
In what follows, $(i|i\pm 1)$ represents the unique arrow $i \to i \pm 1$.
\begin{itemize}
    \item For each $i$, set
    \[(i+1|i)(i+2|i+1) = 0 \text{ and } (i-1|i)(i-2|i-1) = 0.\]
    \item For each $i$, set 
    \[(i+1|i)(i|i+1) = (i-1|i)(i|i-1).\]
\end{itemize}
Consequently, in the zig-zag algebra, any path of length at least $3$ is automatically zero, and the only surviving paths of length $2$ are back-and-forth loops starting from any vertex (and all possible such loops are set to be equal).
The paths of length $0$ and $1$ remain as-is.

Let $Z_n-\text{proj}$ be the category of (graded) projective modules over $Z_n$.
The category $\mathcal{C}_n$ is constructed as a differential graded version of the bounded homotopy category of complexes of projective modules over $Z_n$, in which we identify the internal grading shift with the homological grading shift, and consequently also the triangulated shift.
For more details, see, e.g.~\cite[Section 6]{bap.deo.lic:20}.

\subsection{Important properties of the category}
In this section, we record some important properties of the category $\mathcal{C}_n$.
First, the category $\mathcal{C}_n$ is \emph{triangulated}: there is a a triangulated shift functor $[1] \colon \mathcal{C}_n \to \mathcal{C}_n$ which is an equivalence.
An $n$-fold composition of $[1]$ is denoted $[n]$.

Denote by $\Hom(A,B)$ the set of morphisms in $\mathcal{C}_n$ from an object $A$ to an object $B$.
Sometimes we write $\Hom(A,B)$ as $\Hom^0(A,B)$, and further write $\Hom^n(A,B)$ to mean $\Hom(A, B[n])$ for any integer $n$.

The category $\mathcal{C}_n$ is generated as a triangulated category by the objects $P_1, \ldots, P_n$.
That is, the smallest triangulated subcategory of $\mathcal{C}_n$ (closed under isomorphisms) that contains the objects $P_1, \ldots, P_n$ is $\mathcal{C}_n$ itself.
These objects correspond to the indecomposable projective modules of the zig-zag algebra $Z_n$.

Each object $P_i$ is \emph{spherical}.
This means that
\[\Hom^n(P_i,P_i) = \begin{cases}
    k&n = 0\text{ or }n = 2,\\
    0&\text{otherwise}.
\end{cases}\]
\begin{remark}
    The reason for the notation is that the ring of endomorphisms of $P_i$ of all possible degrees is isomorphic to the cohomology ring of a sphere (in this case a $2$-sphere).
\end{remark}

It is a general fact that any spherical object $X$ of a triangulated category gives an associated functor $\sigma_X$, called the \emph{spherical twist} in $X$ (see Seidel--Thomas~\cite{sei.tho:01} for more details.)
The functor $\sigma_X$ is an auto-equivalence; that is, it has an inverse equivalence $\sigma_X^{-1}$ such that compositions in both directions are isomorphic to the identity functor.

In particular, we obtain equivalences $\sigma_{P_i} \colon \mathcal{C}_n \to \mathcal{C}_n$.

\subsection{The Jordan--H{\"o}lder filtration}
The category $\mathcal{C}_n$ has a bounded $t$-structure. 
This means that there is an abelian subcategory $\mathcal{A}_n \subset \mathcal{C}_n$, such that every object $X \in \mathcal{C}_n$ has a unique finite filtration whose factors lie in $\mathcal{A}[i]$ for decreasing values of $i$. 
This filtration is called the \emph{cohomology filtration}.
In fact, this abelian subcategory $\mathcal{A}_n$ is also generated by the objects $P_i$: it is the extension-closure of the objects $P_i$.
Moreover, the objects $P_i$ are simple objects of $\mathcal{A}_n$.

First consider any object $X \in \mathcal{A}_n$.
The category $\mathcal{A}_n$ is a finite-length abelian category.
It is a standard fact that $X$ has a Jordan--H{\"o}lder filtration whose factors are simple objects in $\mathcal{A}_n$, namely the objects $P_i$.

Now consider a general object $X \in \mathcal{C}_n$.
We first consider the cohomology filtration of $X$, with factors $Y_j \in \mathcal{A}[j]$.
For each $Y_j \in \mathcal{A}[j]$, we consider its (shifted) Jordan--H{\"o}lder filtration, which breaks $Y_j$ up further into copies of $P_i[j]$.
Putting these two together, we obtain a finer filtration of the object $X$, which we also call the Jordan--H{\"o}lder filtration of $X$.

The factors of this Jordan--H{\"o}lder filtration are shifted copies of $P_i$ for all $i$.
Thus we can count the number of occurrences of each $P_i$ in the Jordan--H{\"o}lder filtration of $X$, and it is well-known that these counts do not depend on the specific choice of the Jordan--H{\"o}lder filtration.

\subsection{The action of the braid group}
Recall the spherical twist functors $\sigma_{P_i} \colon \mathcal{C}_n \to \mathcal{C}_n$.
A remarkable observation of Seidel--Thomas~\cite{sei.tho:01} is that these functors (weakly) obey the relations of the $n$-strand braid group.
That is, we have isomorphisms of functors
\[\sigma_{P_i} \sigma_{P_j} \cong \sigma_{P_j} \sigma_{P_i}\]
whenever $|i - j| \neq 1$, and 
\[\sigma_{P_i} \sigma_{P_j} \sigma_{P_i} \cong \sigma_{P_j} \sigma_{P_i} \sigma_{P_j}\]
whenever $|i - j| = 1$.

Since these are precisely the relations of the group $B_n$, we obtain an action of $B_n$ on the objects of the category $\mathcal{C}_n$.

\subsection{Open problems and future directions}
Consider the following broad question.
Given an object $X$ of $\mathcal{C}_n$ and a braid $g \in B_n$, can we relate the Jordan--H{\"o}lder multiplicities of $\beta(X)$ to the Jordan--H{\"o}lder multiplicities of $X$?
Stated more simply, can we compute the Jordan--H{\"o}lder multiplicities of $\beta(P_i)$ for any $i$ and any $\beta$? 

Answers to this question are known in some cases, and remain open in others.
For instance, a complete answer was obtained by Rouquier--Zimmermann~\cite{rou.zim:03} for the $3$-strand braid group $B_3$ acting on $\mathcal{C}_3$.
This answer was rediscovered and refined in terms of more general filtrations (Harder--Narasimhan filtrations) in~\cite{bap.deo.lic:20} and~\cite{bbl:23}.

It is also known that if $\beta = \sigma_{P_j}^\ell$ for some $\ell$, then the limit as $\ell \to \infty$ of the counts of $\beta(X)$ can be obtained, up to a common scaling factor, by computing the sum of the dimensions of $\Hom^m(P_j, X)$ for all $m$~\cite{bbl:23}.

However, for the vast majority of values of $n$ and most of the elements of the braid groups $B_n$, we do not have a good answer to this question. 
While there is vast potential for future work, we write down a few specific open problems.
\begin{enumerate}
    \item Generalise the Rouquier--Zimmermann theorem (and its corresponding versions in~\cite{bap.deo.lic:20} and~\cite{bbl:23}) to larger values of $n$.
    \item We can compute a finer version of Jordan--H{\"o}lder multiplicities: split up the number of occurrences of each $P_i$ by degree shift.
    That is, record the number of occurrences of $P_i[d]$ separately for every possible $d$.
    This information can be encoded in a polynomial in one variable in $q^{\pm 1}$, in which the coefficient of $q^d$ is the multiplicity of $P_i[d]$.

    Generalise the Rouquier--Zimmermann theorem in this setting to larger values of $n$.
    \item By using a more refined version of Jordan--H{\"o}lder multiplicities, known as Harder--Narasimhan multiplicities, we observe that the possible Harder--Narasimhan factors of any object of the form $\beta(P_i)$ are highly constrained, and satisfy some very nice combinatorial properties.
    
    This constraint can be explicitly described for any $B_n$ via a geometric model (due to Khovanov--Seidel~\cite{kho.sei:02}) for objects in the category $\mathcal{C}_n$.
    Nevertheless, the relationship of these constrained sets with the action of $B_n$ is mysterious.
    For example, given an object $X$, is there an algorithm to write down a braid that will send $X$ to an object with a desired set of Harder--Narasimhan filtration factors?
    \item Can we use the combinatorial structure mentioned above to algorithmically write down combinatorial actions of braid groups on simpler sets? What properties do these actions satisfy?
    \item All of the categorical constructions described in this paper also go through for more general versions of braid groups, known as Artin--Tits braid groups. All of the questions above remain open for all but the simplest cases of Artin--Tits groups.
\end{enumerate}

\section{Dataset and Model Parameter Details}
\subsection{Symmetric Group Dataset} \label{sym_dataset}
We generated a dataset of $500,000$ samples consisting of words of the free group $F_{10}$, and labels corresponding to their order as elements of $S_{10}$. The first step of dataset generation was to fix a maximum word length chosen such that it is possible to sample every element of $S_{10}$. For a generating set corresponding to adjacent transpositions of elements in $S_n$, this longest word will be of length $\frac{n(n-1)}{2}$\cite{Humphreys_2000}, and for $S_{10}$ choose our maximum length to be 64. We define a uniform distribution on the set of generators ${\sigma_0,\sigma_1,...,\sigma_{n-1}}$ where $\sigma_0 = \id$ and all other $\sigma_i = (i\ i+1)$. Informally, our generating set consists of adjacent transpositions of the form $(i\ i+1)$ along with an identity generator. The presence of the identity generator adds variability to word length while enforcing identity invariance. Sample order labels in $S_{10}$ are computed using the SymPy package \cite{10.7717/peerj-cs.103}.

While we do not strictly enforce a separation of elements between training, test, and validation sets, it is statistically unlikely to have any significant overlap between the splits. Recall that $|S_{10}|=10!=3,628,800$. Our dataset of $500,000$ samples therefore covers at most $13.7\%$ of $S_{10}$, implying the likelihood of significant overlap between partitions upon reduction is very low. Moreover, because we are sampling \emph{unreduced} words from $F_{10}$ of length 64, there are $10^{64}$ possible words we could sample from, making the probability of direct overlap between partitions effectively zero.

\subsection{Categorical Braid Action Experiment Details} \label{cat_braid_details}

\paragraph{Dataset Generation}
An initial dataset of Jordan--H{\"o}lder multiplicities for braid words up to length 6 was provided. We implemented a state automaton algorithm from~\cite{bbl:23} to generate additional examples for longer braid words. This method was compared against the Jordan--H{\"o}lder multiplicities of the initial dataset to verify correctness. 

\paragraph{Baseline Comparison Experiment Details}
The baseline comparison dataset consists of 47,831 examples with braid words up to length 8. The data was split into $60\%$ training data, $20\%$ validation data, and $20\%$ test which were fixed for all models. All of the models trained using an Adam optimizer with a learning rate of $1\text{e--}4$ and a batch size of 128. The chosen parameters for the models are:
\begin{itemize}
    \item MatrixNet:  Single channel $14 \times 14$ matrix size
    \item MatrixNet-LN: Single channel $10 \times 10$  matrix size, 128 dimensions for linear network in the matrix block
    \item MatrixNet-MC: 3-channel $8 \times 8$ matrix size
    \item MatrixNet-NL: Single channel $10 \times 10$ matrix size,  128 hidden dimensions and a $\tanh$ non-linearity between linear layers of matrix block
    \item MLP: 3-layer MLP with 128 hidden dimensions for each layer and ReLU activation functions followed by a single linear layer output.
    \item LSTM: 6 LSTM layers with 16 dimensional input embeddings and 32 hidden dimensions followed by a 2-layer MLP classifier with 64 hidden dimensions and ReLU activation.
    \item Transformer: 3 transformer layers with 4 attention heads, 16 dimensional embeddings and 32 hidden dimensions. Used mean pooling and a single linear layer output.
\end{itemize}
All of the MatrixNet architectures used a 2-layer MLP with 128 hidden dimensions and ReLU activation to compute output after the matrix block.

\paragraph{Length Extrapolation Experiment}
For the generalization experiment we generated a dataset of all braid words up to length 7 which was split into $80\%$ training and $20\%$ validation. We also generated three separate test datasets of 10,000 examples each with braid words of length 8, 9, and 10 to evaluate how performance degrades over increasing length. The models were trained for 100 epochs on the training and validation sets and then tested on the three test sets.

\paragraph{Regularization Details}
All MatrixNet architectures were trained using the regularization loss defined in Section~\ref{regularization}. We chose the Frobenius norm for the norm and used the braid relation $\sigma_1 \sigma_2 \sigma_2 = \sigma_2 \sigma_1 \sigma 2$ and the inverse $\sigma_1^{-1}\sigma_2^{-1}\sigma_{1}^{-1} = \sigma_2^{-1} \sigma_{1}^{-1} \sigma_{2}^{-1}$. The regularization term was added to the loss every 10 training batches. 

\paragraph{Hardware Details}
All of the categorical braid action experiments were run on a machine with a single Nvidia RTX 2080 ti GPU.

\subsection{Length Interpolation Results}
We also performed a length interpolation generalization experiment to test how well each model generalizes to braid words that are shorter than the maximum length seen during training. The results are not presented in the main body of the paper as all models generalize to shorter braid words. 
\begin{table}[th]
    \centering
    \caption{Length 5 interpolation performance of baseline models and MatrixNet variations. Test MSE and accuracy is measured over test set which contains braid words of same length as training.}
    \begin{tabular}{c|cc|cc}
    \toprule
         Model Type & Test MSE & Test count acc. &  Interpolation MSE & Interpolation count acc.\\
         \midrule
         MLP & $0.3628$ & $72.2\%$ & $0.174$ & $80.4\%$\\
         LSTM & $0.7995$ & $53.2\%$ & $0.5442$ & $57.7\%$ \\
         MatrixNet & $1.097$ & $59.7\%$ & $0.413$ & $67.3\%$ \\
        MatrixNet-LN & $0.0120$ & $99.9\%$ & $0.0077$ & $\mathbf{100\%}$  \\
        MatrixNet-MC & $0.2544$ & $91.1\%$ & $0.0691$ & $\mathbf{100\%}$\\
        MatrixNet-NL & $\mathbf{6.8}\textbf{e--3}$ & $\mathbf{100\%}$ & $\mathbf{3.8}\textbf{e--3}$ & $\mathbf{100\%}$ \\
        \bottomrule
    \end{tabular}
    \label{tab:braid_interp}
\end{table}

\end{document}